\documentclass[11pt,a4paper]{article}
\usepackage[hyperref]{emnlp2019}
\usepackage{times}
\usepackage{latexsym}
\usepackage{url}

\aclfinalcopy % Uncomment this line for the final submission
\setcounter{secnumdepth}{2}
%\setlength\titlebox{5cm}
% You can expand the titlebox if you need extra space
% to show all the authors. Please do not make the titlebox
% smaller than 5cm (the original size); we will check this
% in the camera-ready version and ask you to change it back.

\usepackage{helvet}
\usepackage{courier}
\usepackage{multirow}
\usepackage{bm}
\usepackage{amsmath}
\usepackage{amsthm,amssymb}
\usepackage{mdwlist}
\usepackage{algorithm}
\usepackage[noend]{algpseudocode}
\usepackage{relsize}
\usepackage{amsfonts}
\usepackage{graphicx}
\usepackage{subfigure}
\usepackage{caption}
\usepackage{hhline}
\usepackage{color}
\usepackage{colortbl}
\usepackage{booktabs}
\newtheorem{theorem}{Theorem}

\hyphenpenalty=10000
\tolerance=2000

\makeatletter
\def\BState{\State\hskip-\ALG@thistlm}
\makeatother

\DeclareMathOperator*{\argmax}{arg\,max}

\title{Weighed Domain-Invariant Representation Learning for Cross-domain Sentiment Analysis}

\author{
Minlong Peng, Qi Zhang, Xuanjing Huang\\
School of Computer Science, Fudan University\\
825 Zhangheng Road, Shanghai, China\\
\{mlpeng16,qz,xjhuang\}@fudan.edu.cn
}
 \date{}
 
\begin{document}
\maketitle
\begin{abstract}

%The task of adopting a model trained on the source to a distribution different target domain with good performance has received considerable attention in sentiment analysis. 

%Domain-invariant representation learning (DIRL) is one of the most popular frameworks for cross-domain sentiment analysis. However, in this work, we find out that DIRL may harm domain adaptation when the label distribution $\rm{P}(\rm{Y})$ changes across domains. To address this problem, we propose a modification to DIRL, obtaining a novel weighted domain-invariant representation learning (WDIRL) framework. We show that it is easy to transfer existing SOTA DIRL models to WDIRL. Empirical studies on extensive cross-domain sentiment analysis tasks verified our analysis and showed the effectiveness of our proposed solution. 

Cross-domain sentiment analysis is currently a hot topic in the research and engineering areas. One of the most popular frameworks in this field is the domain-invariant representation learning (DIRL) paradigm, which aims to learn a distribution-invariant feature representation across domains. However, in this work, we find out that applying DIRL may harm domain adaptation when the label distribution $\rm{P}(\rm{Y})$ changes across domains. To address this problem, we propose a modification to DIRL, obtaining a novel weighted domain-invariant representation learning (WDIRL) framework. We show that it is easy to transfer existing SOTA DIRL models to WDIRL. Empirical studies on extensive cross-domain sentiment analysis tasks verified our statements and showed the effectiveness of our proposed solution.

\end{abstract} 

\section{Introduction}

Sentiment analysis aims to predict sentiment polarity of user-generated data with emotional orientation like movie reviews. The exponentially increase of online reviews makes it an interesting topic in research and industrial areas. However, reviews can span so many different domains and the collection and preprocessing of large amounts of data for new domains is often time-consuming and expensive. Therefore, cross-domain sentiment analysis is currently a hot topic, which aims to transfer knowledge from a label-rich source domain (S) to the label-few target domain (T). 

In recent years, one of the most popular frameworks for cross-domain sentiment analysis is the domain invariant representation learning (DIRL) framework \cite{glorot2011domain,fernando2013unsupervised,ganin2016domain,zellinger2017central,
li2017end}. Methods of this framework follow the idea of extracting a domain-invariant feature representation, in which the data distributions of the source and target domains are similar. Based on the resultant representations, they learn the supervised classifier using source rich labeled data. %Specifically, let $\rm{P}_S$ denote the distribution of source domain data and $\rm{P}_T$ denote that of the target domain data. Methods of this paradigm train a feature generator $G(\cdot)$ with restriction $\rm{P}_S(G(\rm{X})) \approx \rm{P}_T(G(\rm{X}))$. The supervised classifier is trained on $G(\rm{X})$ with source domain labeled data. 
The main difference among these methods is the applied technique to force the feature representations to be domain-invariant.

However, in this work, we discover that applying DIRL may harm domain adaptation in the situation that the label distribution $\rm{P}(\rm{Y})$ shifts across domains. Specifically, let $\rm{X}$ and $\rm{Y}$ denote the input and label random variable, respectively, and $G(\rm{X})$ denote the feature representation of $\rm{X}$. We found out that when $\rm{P}(\rm{Y})$ changes across domains while $\rm{P}(\rm{X}|\rm{Y})$ stays the same, forcing $G(\rm{X})$ to be domain-invariant will make $G(\rm{X})$ uninformative to $\rm{Y}$. This will, in turn, harm the generation of the supervised classifier to the target domain. In addition, for the more general condition that both $\rm{P}(\rm{Y})$ and $\rm{P}(\rm{X}|\rm{Y})$ shift across domains, we deduced a conflict between the object of making the classification error small and that of making $G(\rm{X})$ domain-invariant. %In summary, we conclude that it is impossible to obtain a feature representation $G(\rm{X})$ that is highly separable and in the meanwhile domain-invariant, when $\rm{P}(\rm{Y})$ shifts across domains using DIRL. 

We argue that the problem is worthy of studying since the shift of $\rm{P}(\rm{Y})$ exists in many real-world cross-domain sentiment analysis tasks \cite{glorot2011domain}. For example, the marginal distribution of the sentiment of a product can be affected by the overall social environment and change in different time periods; and for different products, their marginal distributions of the sentiment are naturally considered different. Moreover, there are many factors, such as the original data distribution, data collection time, and data clearing method, that can affect $\rm{P}(\rm{Y})$ of the collected target domain unlabeled dataset. Note that in the real-world cross-domain tasks, we do not know the labels of the collected target domain data. Thus, we cannot previously align its label distribution $\rm{P}_T(\bm{Y})$ with that of source domain labeled data $\rm{P}_S(\bm{Y})$, as done in many previous works \cite{glorot2011domain,ganin2016domain,tzeng2017adversarial,li2017end,he2018adaptive,peng2018cross}. 

%Take the target shift condition as an example. The sentiment is naturally considered as the guide of review generation. For a given product, the marginal distribution ($\rm{P}(\rm{Y})$) of the sentiment could change across different time periods, but the conditional distribution of the reviews given the sentiment is expected to be invariant. 

%We argue that the problem is worthy of studying in the field of cross-domain sentiment analysis. In the real-world cross-domain sentiment analysis tasks, $\rm{P}_T(\rm{Y})$ can often shift from $\rm{P}_S(\rm{Y})$, since many factors can affect its value, such as the original data distribution, data collection time, data clearing method, and so on. Note that in the real-world tasks, we have no or little target domain labeled data. Thus, we cannot use label information of the collected target domain data to align its label distribution $\rm{P}_T(\bm{Y})$ with that of source domain labeled data $\rm{P}_S(\bm{Y})$, as as done in many previously experimental studies \cite{glorot2011domain,ganin2016domain,tzeng2017adversarial,li2017end,he2018adaptive}.

To address the problem of DIRL resulted from the shift of $\rm{P}(\rm{Y})$, we propose a modification to DIRL, obtaining a weighted domain-invariant representation learning (WDIRL) framework. This framework additionally introduces a class weight $\bm{w}$ to weigh source domain examples by class, hoping to make $\rm{P}(\rm{Y})$ of the weighted source domain close to that of the target domain. Based on $\bm{w}$, it resolves domain shift in two steps. In the first step, it forces the marginal distribution $\rm{P}(\rm{X})$ to be domain-invariant between the target domain and the weighted source domain instead of the original source, obtaining a supervised classifier $\rm{P}_S(\rm{Y}|\rm{X}; \bm{\Phi})$ and a class weight $\bm{w}$. In the second step, it resolves the shift of $\rm{P}(\rm{Y}|\rm{X})$ by adjusting $\rm{P}_S(\rm{Y}|\rm{X}; \bm{\Phi})$ using $\bm{w}$ for label prediction in the target domain. We detail these two steps in \textsection \ref{sec:wdirl}. Moreover, we will illustrate how to transfer existing DIRL models to their WDIRL counterparts, taking the representative metric-based CMD model \cite{zellinger2017central} and the adversarial-learning-based DANN model \cite{ganin2016domain} as an example, respectively.

In summary, the contributions of this paper include: ($\bm{i}$) We theoretically and empirically analyse the problem of DIRL for domain adaptation when the marginal distribution $\rm{P}(\rm{Y})$ shifts across domains. ($\bm{ii}$) We proposed a novel method to address the problem and show how to incorporate it with existent DIRL  models. ($\bm{iii}$) Experimental studies on extensive cross-domain sentiment analysis tasks show that models of our WDIRL framework can greatly outperform their DIRL counterparts.
%our proposed method can achieve consi an appreciable improvement for the original DIRL model.

\section{Preliminary and Related Work}
\subsection{Domain Adaptation}
For expression consistency, in this work, we consider domain adaptation in the unsupervised setting (however, we argue that our analysis and solution also applies to the supervised and semi-supervised domain adaptation settings). In the unsupervised domain adaptation setting, there are two different distributions over $\rm{X} \times \rm{Y}$: the source domain $\rm{P}_S(\rm{X},\rm{Y})$ and the target domain $\rm{P}_T(\rm{X},\rm{Y})$. And there is a labeled data set $\mathcal{D}_S$ drawn $i.i.d$ from $\rm{P}_S(\rm{X},\rm{Y})$ and an unlabeled data set $\mathcal{D}_T$ drawn $i.i.d.$ from the marginal distribution $\rm{P}_T(\rm{X})$:
\begin{align*}
    & \mathcal{D}_S=\{(x_i, y_i)\}_{i=1}^n \sim \rm{P}_S(\rm{X}, \rm{Y}),\\
    & \mathcal{D}_T=\{x_i\}_{i=n+1}^N \sim \rm{P}_T(\rm{X}).
\end{align*}
The goal of domain adaptation is to build a classier $f:\rm{X} \rightarrow \rm{Y}$ that has good performance in the target domain using $\mathcal{D}_S$ and $\mathcal{D}_T$. 

For this purpose, many approaches have been proposed from different views, such as instance reweighting~\cite{mansour2009domain}, pivot-based information passing \cite{blitzer2007biographies}, spectral feature alignment \cite{pan2010cross}
subsampling~\cite{chen2011automatic}, and of course the domain-invariant representation learning~\cite{pan2011domain,gopalan2011domain,long2013transfer,muandet2013domain,yosinski2014transferable,long2015domain,aljundi2015landmarks,wei2016deep,bousmalis2016domain,pinheiro2018unsupervised,zhao2018adversarial}. 

\subsection{Domain Invariant Representation Learning}

Domain invariant representation learning (DIRL) is a very popular framework for performing domain adaptation in the cross-domain sentiment analysis field \cite{ghifary2014domain,li2017end,chen2018adversarial,peng2018cross}. It is heavily motivated by the following theorem \cite{ben2007analysis}.
\begin{theorem} \label{theorem:basic}
For a hypothesis $h$,
\begin{equation}\label{eq:previous_theorem}
\begin{split}
&\mathcal{L}_T(h) \leq \mathcal{L}_S(h) + {d}_{1}(\rm{P}_S(\rm{X}), \rm{P}_T(\rm{X})) \\
&+ \min\{\mathbb{E}_{\bm{x} \sim \rm{P}_S}\left[|\rm{P}_S(y|\bm{x})-\rm{P}_T(y|\bm{x})|\right], \\ 
&\quad \quad \quad \; \; \mathbb{E}_{\bm{x} \sim \rm{P}_T}\left[|\rm{P}_S(y|\bm{x})-\rm{P}_T(y|\bm{x})|\right] \},
\end{split}
\end{equation}
Here, $\mathcal{L}_S(h)$ denotes the expected loss with hypothesis $h$ in the source domain, $\mathcal{L}_T(h)$ denotes the counterpart in the target domain, $d_1$ is a measure of divergence between two distributions.
\end{theorem}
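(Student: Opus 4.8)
The plan is to reproduce the argument of \cite{ben2007analysis}, phrased in the probabilistic-labelling setting. For a domain $D$, write its labelling function as the posterior $f_D(\bm{x})=\mathrm{P}_D(y\mid\bm{x})$ (for binary $\mathrm{Y}$, the probability of the positive class), and for any two $[0,1]$-valued functions $g,g'$ set $\mathcal{L}_D(g,g')=\mathbb{E}_{\bm{x}\sim\mathrm{P}_D}[\,|g(\bm{x})-g'(\bm{x})|\,]$, so that $\mathcal{L}_S(h)=\mathcal{L}_S(h,f_S)$ and $\mathcal{L}_T(h)=\mathcal{L}_T(h,f_T)$. Two elementary facts will drive the proof: (i) pointwise $|h-f_T|\le|h-f_S|+|f_S-f_T|$, hence $\mathcal{L}_D(h,f_T)\le\mathcal{L}_D(h,f_S)+\mathcal{L}_D(f_S,f_T)$ for every $D$; and (ii) every function in sight takes values in $[0,1]$, so $|h(\bm{x})-f_S(\bm{x})|\le1$.

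First I would route the target risk through the \emph{source} labelling function evaluated on the target marginal:
\[
\mathcal{L}_T(h)=\mathcal{L}_T(h,f_T)\le\mathcal{L}_T(h,f_S)+\mathcal{L}_T(f_S,f_T)=\mathcal{L}_T(h,f_S)+\mathbb{E}_{\bm{x}\sim\mathrm{P}_T}\big[\,|\mathrm{P}_S(y\mid\bm{x})-\mathrm{P}_T(y\mid\bm{x})|\,\big].
\]
It then remains to pass $\mathcal{L}_T(h,f_S)$ to the source marginal. Writing both risks as integrals of the common integrand $|h(\bm{x})-f_S(\bm{x})|$ against the respective marginal densities,
\[
\mathcal{L}_T(h,f_S)-\mathcal{L}_S(h)=\int\big(\mathrm{P}_T(\bm{x})-\mathrm{P}_S(\bm{x})\big)\,|h(\bm{x})-f_S(\bm{x})|\,d\bm{x}\le\int|\mathrm{P}_T(\bm{x})-\mathrm{P}_S(\bm{x})|\,d\bm{x},
\]
using $|h-f_S|\le1$; identifying the last integral with $d_1(\mathrm{P}_S(\mathrm{X}),\mathrm{P}_T(\mathrm{X}))$ gives $\mathcal{L}_T(h)\le\mathcal{L}_S(h)+d_1(\mathrm{P}_S(\mathrm{X}),\mathrm{P}_T(\mathrm{X}))+\mathbb{E}_{\bm{x}\sim\mathrm{P}_T}[\,|\mathrm{P}_S(y\mid\bm{x})-\mathrm{P}_T(y\mid\bm{x})|\,]$.

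To obtain the other member of the $\min$, I would first transport the risk to the source marginal and only then apply the triangle inequality: $\mathcal{L}_T(h,f_T)\le\mathcal{L}_S(h,f_T)+d_1(\mathrm{P}_S(\mathrm{X}),\mathrm{P}_T(\mathrm{X}))\le\mathcal{L}_S(h,f_S)+\mathcal{L}_S(f_S,f_T)+d_1(\mathrm{P}_S(\mathrm{X}),\mathrm{P}_T(\mathrm{X}))$, which replaces the last summand by $\mathbb{E}_{\bm{x}\sim\mathrm{P}_S}[\,|\mathrm{P}_S(y\mid\bm{x})-\mathrm{P}_T(y\mid\bm{x})|\,]$. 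Both bounds are valid, hence so is the one with the smaller of the two conditional-shift terms, which is exactly \eqref{eq:previous_theorem}. The body of the argument is just the triangle inequality and the density-difference bookkeeping; the step that actually needs care — and hence the main ``obstacle'' — is fixing conventions so the pieces fit: the pointwise triangle inequality and the uniform bound $|h-f_S|\le1$ both presuppose that $h$ and the posteriors are treated as $[0,1]$-valued and that $\mathcal{L}$ is the absolute ($L^1$) loss, and the coefficient $1$ multiplying $d_1$ is tied to defining $d_1(D,D')=\int|D(\bm{x})-D'(\bm{x})|\,d\bm{x}$; under a different loss or a different normalization of $d_1$ the same skeleton works after rescaling, but the constants must be tracked. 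I would state these conventions explicitly before starting.
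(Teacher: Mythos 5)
The paper does not prove this theorem itself --- it is imported verbatim from \cite{ben2007analysis} --- and your reconstruction is exactly the standard argument from that source: the pointwise triangle inequality through $f_S$, the change-of-measure term bounded by $d_1$ via $|h-f_S|\le 1$, and the two orderings combined into the $\min$. Your proof is correct (modulo the convention that $d_1$ is the $L^1$ distance between the marginal densities, which you rightly flag), so there is nothing to reconcile with the paper.
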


Based on Theorem \ref{theorem:basic} and assuming that performing feature transform on $\rm{X}$ will not increase the values of the first and third terms of the right side of Ineq. (\ref{eq:previous_theorem}), methods of the DIRL framework apply a feature map $G$ onto $\rm{X}$, hoping to obtain a feature representation $G(\rm{X})$ that has a lower value of ${d}_{1}(\rm{P}_S(G(\rm{X})), \rm{P}_T(G(\rm{X})))$. To this end, different methods have been proposed. These methods can be roughly divided into two directions. The first direction is to design a differentiable metric to explicitly evaluate the discrepancy between two distributions. We call methods of this direction as the metric-based DIRL methods. A representative work of this direction is the center-momentum-based model proposed by \citet{zellinger2017central}. In that work, they proposed a central moment discrepancy metric (CMD) to evaluate the discrepancy between two distributions. Specifically, let denote $\rm{X}_S$ and $\rm{X}_T$ an $M$ dimensional random vector on the compact interval $[a; b]^M$ over distribution $\rm{P}_S$ and $\rm{P}_T$, respectively. The CMD loss between $\rm{P}_S$ and $\rm{P}_T$ is defined by:
\begin{equation}
\begin{split}
\text{CMD}_K&(\rm{X}_S, \rm{X}_T) = \frac{1}{|b-a|} \parallel E(\rm{X}_S) - E(\rm{X}_T) \parallel_2 \\
&+ \frac{1}{|b-a|^k} \sum_{k=2}^{K} \parallel C_k(\rm{X}_S) - C_k(\rm{X}_T) \parallel_2.
\end{split}
\end{equation}
Here, $\mathbb{E}(\rm{X})$ denotes the expectation of $\rm{X}$ over distribution $\rm{P}_S(\rm{X})$, and
$$C_k(X)=\left( \mathbb{E}(\prod_{i=1}^M (\rm{X}_i-\mathbb{E}(\rm{X}_i))^{r_i}\right)_{r_i \geq 0, \sum_i^M r_i=k},$$
is the $k$-th momentum, where $\rm{X}_i$ denotes the $i^{th}$ dimensional variable of $\rm{X}$.

The second direction is to perform adversarial training between the feature generator $G$ and a domain discriminator $D$. We call methods of this direction as the adversarial-learning-based methods. As a representative, \citet{ganin2016domain} trained $D$ to distinguish the domain of a given example $x$ based on its representation $G(x)$. At the same time, they encouraged $G$ to deceive $D$, i.e., to make $D$ unable to distinguish the domain of $x$. More specifically, $D$ was trained to minimize the loss:
\begin{equation}
\begin{split}
\mathcal{L}_d &= \mathbb{E}_{x \sim \rm{P}_S(\rm{X})}\left[\log \frac{1}{D(G(x))}\right] \\
&+ \mathbb{E}_{x \sim \rm{P}_T(\rm{X})}\left[\log\frac{1}{1- D(G(x))}\right] 
\end{split}
\end{equation}
over its trainable parameters, while in contrast $G$ was trained to maximize $\mathcal{L}_d$. 
According to the work of \citet{Goodfellow2014Generative}, this is equivalent to minimize the Jensen-shannon divergence \cite{amari1987differential,lin1991divergence} $\text{JSD}(\rm{P}_S, \rm{P}_T)$ between $\rm{P}_S(G(\rm{X}))$ and $\rm{P}_T(G(\rm{X}))$ over $G$. Here, for a concise expression, we write $\rm{P}$ as the shorthand for $\rm{P}(G(\rm{X}))$.
%, and $KL$ is the Kullback-Leibler divergence.
%\begin{equation*}
%\begin{split}
%\text{JSD}(\rm{P}_S, \rm{P}_T) &= KL\left(\rm{P}_S \middle|\middle|\frac{\rm{P}%_S+\rm{P}_T}{2}\right) \\
%&+ KL\left(\rm{P}_T \middle|\middle| \frac{\rm{P}_S+\rm{P}_T}{2}\right),
%\end{split}
%\end{equation*}

The task loss is the combination of the supervised learning loss $\mathcal{L}_{sup}$ and the domain-invariant learning loss $\mathcal{L}_{inv}$, which are defined on $\mathcal{D}_S$ only and on the combination of $\mathcal{D}_S$ and $\mathcal{D}_T$, respectively:
\begin{equation}\label{eq:general_loss}
\mathcal{L} = \mathcal{L}_{sup}(\mathcal{D}_S) + \alpha \mathcal{L}_{inv}(\mathcal{D}_S, \mathcal{D}_T).
\end{equation}
Here, $\alpha$ is a hyper-parameter for loss balance, and the aforementioned domain adversarial loss $\text{JSD}(\rm{P}_S, \rm{P}_T)$ and $\text{CMD}_K$ are two concrete forms of $\mathcal{L}_{inv}$.

\section{Problem of Domain-Invariant Representation Learning}

In this work, we found out that applying DIRL may harm domain adaptation in the situation that $\rm{P}(\rm{Y})$ shifts across domains. Specifically, when $\rm{P}_S(\rm{Y})$ differs from $\rm{P}_T(\rm{Y})$, forcing the feature representations $G(\rm{X})$ to be domain-invariant may increase the value of $\mathcal{L}_S(h)$ in Ineq. (\ref{eq:previous_theorem}) and consequently increase the value of $\mathcal{L}_T(h)$, which means the decrease of target domain performance. In the following, we start our analysis under the condition that $\rm{P}_S(\rm{X}|\rm{Y})=\rm{P}_T(\rm{X}|\rm{Y})$. Then, we consider the more general condition that $\rm{P}_S(\rm{X}|\rm{Y})$ also differs from $\rm{P}_T(\rm{X}|\rm{Y})$.

When $\rm{P}_S(\rm{X}|\rm{Y})=\rm{P}_T(\rm{X}|\rm{Y})$, we have 
the following theorem.
\begin{theorem} \label{theorem:target_shift}
Given $\rm{P}_S(\rm{X}|\rm{Y})=\rm{P}_T(\rm{X}|\rm{Y})$, if $\rm{P}_S(\rm{Y}=i) \neq \rm{P}_T(\rm{Y}=i)$ and a feature map $G$ makes $\rm{P}_S \left( \mathcal{M}(\rm{X}))=\rm{P}_T(\mathcal{M}(\rm{X}) \right)$, then $\rm{P}_S(\rm{Y}=i|\mathcal{M}(\rm{X}))=\rm{P}_S(\rm{Y}=i)$.
\end{theorem}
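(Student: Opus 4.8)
The plan is to decompose the feature marginals through the class-conditional feature distributions and then use the fact that applying a map to $X$ alone cannot change any distribution conditioned on $Y$.

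First I would observe that, since $\mathcal{M}(X)$ is a measurable function of $X$, the assumption $P_S(X\mid Y)=P_T(X\mid Y)$ pushes forward to $P_S(\mathcal{M}(X)\mid Y=i)=P_T(\mathcal{M}(X)\mid Y=i)$ for every class $i$; I will write $Q_i$ for this common conditional law.

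Next I would expand the feature marginals by the law of total probability,
\begin{equation*}
P_S(\mathcal{M}(X)) = \sum_i P_S(Y=i)\,Q_i, \qquad P_T(\mathcal{M}(X)) = \sum_i P_T(Y=i)\,Q_i,
\end{equation*}
and feed in the hypothesis that $\mathcal{M}$ equalizes them, $P_S(\mathcal{M}(X))=P_T(\mathcal{M}(X))$. Subtracting gives the pointwise (almost everywhere) identity $\sum_i\bigl(P_S(Y=i)-P_T(Y=i)\bigr)Q_i=0$, which together with the trivial constraint $\sum_i\bigl(P_S(Y=i)-P_T(Y=i)\bigr)=0$ is, in the binary case, a homogeneous $2\times2$ system whose coefficient $P_S(Y=0)-P_T(Y=0)$ is nonzero by hypothesis. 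Solving it forces $Q_0=Q_1$, i.e.\ $P_S(\mathcal{M}(X)\mid Y=0)=P_S(\mathcal{M}(X)\mid Y=1)$, which says exactly that $\mathcal{M}(X)$ is independent of $Y$ under $P_S$. Bayes' rule then yields $P_S(Y=i\mid\mathcal{M}(X))=P_S(Y=i)$, the desired conclusion.

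The step I expect to be the main obstacle is the passage from $\sum_i\bigl(P_S(Y=i)-P_T(Y=i)\bigr)Q_i=0$ to $Q_0=Q_1$ once there are more than two classes: then one additionally needs linear independence (or some comparable non-degeneracy) of the class-conditional feature laws $\{Q_i\}$, which is not implied by the stated hypotheses, so I would either restrict the statement to the binary sentiment setting or add such an assumption. The remaining points — measurability of $\mathcal{M}$, the ``almost everywhere'' qualifier on the density identities, and well-definedness of the conditionals — are routine bookkeeping and should not pose difficulty.
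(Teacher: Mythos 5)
Your proof is correct and follows essentially the same route as the paper's: push the invariance of $\mathrm{P}(\mathrm{X}\mid \mathrm{Y})$ through the measurable map $\mathcal{M}$, expand both feature marginals by the law of total probability, cancel to get $\sum_j\bigl(\mathrm{P}_S(\mathrm{Y}=j)-\mathrm{P}_T(\mathrm{Y}=j)\bigr)Q_j=0$, conclude $Q_0=Q_1$ in the binary case, and finish with Bayes' rule. Your caveat is also well taken and worth keeping: for $L\ge 3$ that linear relation does not force $\mathrm{P}_S(\mathrm{Y}=i\mid\mathcal{M}(\mathrm{X}))=\mathrm{P}_S(\mathrm{Y}=i)$ (e.g.\ if only two of three class priors shift, the third conditional law $Q_3$ is unconstrained), so the theorem as stated needs either the binary restriction or a non-degeneracy assumption on the class-conditional feature laws --- a gap that matters since the paper also runs three-class experiments.
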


\begin{proof}%\renewcommand{\qedsymbol}{}
Proofs appear in Appendix \textbf{A}.
\end{proof}

\paragraph{Remark.}
According to Theorem \ref{theorem:target_shift}, we know that when $\rm{P}_S(\rm{X}|\rm{Y})=\rm{P}_T(\rm{X}|\rm{Y})$ and $\rm{P}_S(\rm{Y}=i) \neq \rm{P}_T(\rm{Y}=i)$, forcing $G(\rm{X})$ to be domain-invariant inclines to make data of class $i$ mix with data of other classes in the space of $G(\rm{X})$. This will make it difficult for the supervised classifier to distinguish inputs of class $i$ from inputs of the other classes. \textit{Think about such an extreme case that every instance $x$ is mapped to a consistent point $g_0$ in $G(\rm{X})$}. In this case, $\rm{P}_S(G(\rm{X})=g_0)= \rm{P}_T(G(\rm{X})=g_0) = 1$. Therefore, $G(\rm{X})$ is domain-invariant. As a result, the supervised classifier will assign the label $y^* = \argmax_y \rm{P}_S(\rm{Y}=y)$ to all input examples. This is definitely unacceptable. %Therefore, we have that though we can reduce the value of $d_1(\rm{P}_S(G(\rm{X})), \rm{P}_S(G(\rm{X})))$ with domain-invariant representation learning, we will increase the value of $\mathcal{L}_S(h)$ when $\rm{P}(\rm{Y})$ shifts across domains. This may still led to a high value of $\mathcal{L}_T(h)$ according to Theorem \ref{theorem:basic}.
To give a more intuitive illustration of the above analysis, we offer several empirical studies on Theorem \ref{theorem:target_shift} in Appendix \textbf{B}.

When $\rm{P}_S(\rm{Y})\neq \rm{P}_T(\rm{Y})$ and $\rm{P}_S(\rm{X}|\rm{Y}) \neq \rm{P}_T(\rm{X}|\rm{Y})$, we did not obtain such a strong conclusion as Theorem \ref{theorem:target_shift}. Instead, we deduced a conflict between the object of achieving superior classification performance and that of making features domain-invariant.

Suppose that $\rm{P}_S(\rm{Y}=i) \neq \rm{P}_T(\rm{Y}=i)$ and instances of class $i$ are completely distinguishable from instances of the rest classes in $G(\rm{X})$, i.e.,:
\begin{align*}
    &\resizebox{\columnwidth}{!}{$\rm{P}(G(\rm{X}=x)|\rm{Y}=i) > 0 \Rightarrow \rm{P}(G(\rm{X}=x)|\rm{Y}\neq i) = 0$} \\
    &\resizebox{\columnwidth}{!}{$\rm{P}(G(\rm{X}=x)|\rm{Y}\neq i) > 0 \Rightarrow \rm{P}(G(\rm{X}=x)|\rm{Y} = i) = 0$}.
\end{align*}
In DIRL, we hope that: $$\resizebox{\columnwidth}{!}{$\sum_{i=1}^{L}\rm{P}_S(G(\rm{X})|\rm{Y}=i)\rm{P}_S(\rm{Y}=i)=\sum_{i=1}^{L}\rm{P}_T(G(\rm{X})|\rm{Y}=i)\rm{P}_T(\rm{Y}=i)$}.$$
Consider the region $x \in \mathcal{X}_i$, where $\rm{P}(G(\rm{X}=x)|\rm{Y}=i)>0$. According to the above assumption, we know that $\rm{P}(G(\rm{X}=x \in \mathcal{X}_i)|\rm{Y} \neq i) = 0$. Therefore, applying DIRL will force 
\begin{equation*}
    \resizebox{\columnwidth}{!}{$\rm{P}_S(G(\rm{X}=x)|\rm{Y}=i)\rm{P}_S(\rm{Y}=i) = \rm{P}_T(G(\rm{X}=x)|\rm{Y}=i)\rm{P}_T(\rm{Y}=i)$}
\end{equation*}
in region $x \in \mathcal{X}_i$. Taking the integral of $x$ over $\mathcal{X}_i$ for both sides of the equation, we have $\rm{P}_S(\rm{Y}=i) = \rm{P}_T(\rm{Y}=i)$. This deduction contradicts with the setting that $\rm{P}_S(\rm{Y}=i) \neq \rm{P}_T(\rm{Y}=i)$. Therefore, $G(\rm{X})$ is impossible fully class-separable when it is domain-invariant. Note that the object of the supervised learning is exactly to make $G(\rm{X})$ class-separable. Thus, this actually indicates a conflict between the supervised learning and the domain-invariant representation learning. 
%Such conclusion actually indicates a conflict between the supervised learning and the domain-invariant representation learning, since the object of the supervised learning is to make $G(\rm{X})$ distinguishable.
% Such conclusion actually indicates a conflict between the domain-invariant representation learning and the supervised learning, whose object is to ....
%This analysis can be extended to obtain the conclusion that the more $G(\rm{X})$ is domain-invariant, the harder for the supervised classifier to distinguish class $i$ from the other classes based on the space of $G(\rm{X})$.

Based on the above analysis, we can conclude that \textit{it is impossible to obtain a feature representation $G(X)$ that is class-separable and at the same time, domain-invariant using the DIRL framework, when $\rm{P}(\rm{Y})$ shifts across domains.} However, the shift of $\rm{P}(\rm{Y})$ can exist in many cross-domain sentiment analysis tasks. Therefore, it is worthy of studying in order to deal with the problem of DIRL.

\section{Weighted Domain Invariant Representation Learning} \label{sec:wdirl}
According to the above analysis, we proposed a weighted version of DIRL to address the problem caused by the shift of $\rm{P}(\rm{Y})$ to DIRL. The key idea of this framework is
to first align $\rm{P}(\rm{Y})$ across domains before performing domain-invariant learning, and then take account the shift of $\rm{P}(\rm{Y})$ in the label prediction procedure. Specifically, it introduces a class weight $\bm{w}$ to weigh source domain examples by class. Based on the weighted source domain, the domain shift problem is resolved in two steps. In the first step, it applies DIRL on the target domain and the weighted source domain, aiming to alleviate the influence of the shift of $\rm{P}(\rm{Y})$ during the alignment of $\rm{P}(\rm{X}|\rm{Y})$. In the second step, it uses $\bm{w}$ to reweigh the supervised classifier $\rm{P}_S(\rm{Y}|\rm{X})$ obtained in the first step for target domain label prediction. We detail these two steps in \textsection \ref{sec:p_x_y} and \textsection \ref{sec:p_y}, respectively. 

\subsection{Align $\rm{P}(\rm{X}|\rm{Y})$ with Class Weight} \label{sec:p_x_y}
The motivation behind this practice is to adjust data distribution of the source domain or the target domain to alleviate the shift of $\rm{P}(\rm{Y})$ across domains before applying DIRL. Consider that we only have labels of source domain data, we choose to adjust data distribution of the source domain. To achieve this purpose, we introduce a trainable class weight $\bm{w}$ to reweigh source domain examples by class when performing DIRL, with $\bm{w}_i > 0$. Specifically, we hope that:
\begin{equation*}
    \bm{w}_i\rm{P}_S(\rm{Y}=i) = \rm{P}_T(\rm{Y}=i),
\end{equation*}
and we denote $\bm{w}^*$ the value of $\bm{w}$ that makes this equation hold. We shall see that when $\bm{w}=\bm{w}^*$, DIRL is to align
$\rm{P}_S(G(\rm{X})|\rm{Y})$ with $\rm{P}_T(G(\rm{X})|\rm{Y})$ without the shift of $\rm{P}(\rm{Y})$.
According to our analysis, we know that due to the shift of $\rm{P}(\rm{Y})$, there is a conflict between the training objects of the supervised learning $\mathcal{L}_{sup}$ and the domain-invariant learning $\mathcal{L}_{inv}$. And the conflict degree will decrease as $\rm{P}_S(\rm{Y})$ getting close to $\rm{P}_T(\rm{Y})$. Therefore, during model training, $\bm{w}$ is expected to be optimized toward $\bm{w}^*$ since it will make $\rm{P}(\rm{Y})$ of the weighted source domain close to $\rm{P}_T(\rm{Y})$, so as to solve the conflict. 

%In the following, we first illustrate why the training will lead to an appropriate value of $\bm{w}$. Then, we show how to incorporate $\bm{w}$ with existent metric-based and adversarial-learning-based DIRL models, respectively. 

We now show how to transfer existing DIRL models to their WDIRL counterparts with the above idea. Let $\mathbb{S}:\rm{P} \rightarrow {R}$ denote a statistic function defined over a distribution $\rm{P}$. For example, the expectation function $\mathbb{E}(\rm{X})$ in $\mathbb{E}(\rm{X}_S) \equiv \mathbb{E}(\rm{X})(\rm{P}_S(\rm{X}))$ is a concrete instaintiation of $\mathbb{S}$. In general, to transfer models from DIRL to WDIRL, we should replace $\mathbb{S}(\rm{P}_S(\rm{X}))$ defined in $\mathcal{L}_{inv}$ with \begin{align*}
    \hat{\rm{P}}_S(\rm{X})=\sum_{i=1}^L \bm{w}_i \rm{P}_S(\rm{Y}=i) \mathbb{S}(\rm{P}_S(\rm{X}|\rm{Y}=i)), \\ 
    s.t., \bm{w}_i > 0, \sum_{i=1}^L \bm{w}_i \rm{P}_S(\rm{Y}=i) = 1.
\end{align*}

Take the CMD metric as an example. In WDIRL, the revised form of ${\text{CMD}}_K$ is defined by:
\begin{equation}
\begin{split}
&\widehat{\text{CMD}}_K(\rm{X}_S, \rm{X}_T) \\
& \resizebox{\columnwidth}{!}{$= \frac{1}{|b-a|} \parallel \sum_{i=1}^L \bm{w}_i \rm{P}_S(\rm{Y}=i) \mathbb{E}(\rm{X}_S|\rm{Y}_S=i) - E(\rm{X}_T) \parallel_2$} \\
&\resizebox{\columnwidth}{!}{$+ \frac{1}{|b-a|^k} \sum_{k=2}^{K} \parallel \sum_{i=1}^L \bm{w}_i \rm{P}_S(\rm{Y}=i) C_k(\rm{X}_S|\rm{Y}_S=i) - C_k(\rm{X}_T) \parallel_2$}, \\
& s.t., \bm{w}_i > 0, \sum_{i=1}^L \bm{w}_i \rm{P}_S(\rm{Y}=i) = 1.
\end{split}
\end{equation}
Here, $\mathbb{E}(\rm{X}_S|\rm{Y}_S=i) \equiv \mathbb{E}(\rm{X})(\rm{P}_S(\rm{X}|\rm{Y}=i))$ denotes the expectation of $\rm{X}$ over distribution $\rm{P}_S(\rm{X}|\rm{Y}=i)$. Note that both $\rm{P}_S(\rm{Y}=i)$ and $\mathbb{E}(\rm{X}_S|\rm{Y}_S=i)$ can be estimated using source labeled data, and $\mathbb{E}(\rm{X}_T)$ can be estimated using target unlabeled data. 

As for those adversarial-learning-based DIRL methods, e.g., DANN \cite{ganin2016domain}, the revised domain-invariant loss can be precisely defined by:
\begin{equation}
\begin{split}
\hat{\mathcal{L}}_d &\resizebox{0.9\columnwidth}{!}{$= \sum_{i=1}^L \bm{w}_i \rm{P}_S(\rm{Y}=i) \mathbb{E}_{x \sim \rm{P}_S(\rm{X}|\rm{Y}=i)}\left[\log \frac{1}{D(G(x))}\right]$} \\
&+ \mathbb{E}_{x \sim \rm{P}_T(\rm{X})}\left[\log\frac{1}{1- D(G(x))}\right],\\
& s.t., \bm{w}_i > 0, \sum_{i=1}^L \bm{w}_i \rm{P}_S(\rm{Y}=i) = 1.
\end{split}
\end{equation}
During model training, $D$ is optimized in the direction to minimize $\hat{\mathcal{L}}_d$, while $G$ and $\bm{w}$ are optimized to maximize $\hat{\mathcal{L}}_d$. 
In the following, we denote $\widehat{\text{JSD}}(\rm{P}_S, \rm{P}_T)$ the equivalent loss defined over $G$ for the revised version of domain adversarial learning. 

The general task loss in WDIRL is defined by:
\begin{equation} \label{eq:weighted_general_loss}
\hat{\mathcal{L}} = \mathcal{L}_{sup}(\mathcal{D}_S) + \alpha \hat{\mathcal{L}}_{inv}(\mathcal{D}_S, \mathcal{D}_T),
\end{equation}
where $\hat{\mathcal{L}}_{inv}$ is a unified representation of the domain-invariant loss in WDIRL, such as $\widehat{\text{CMD}}_K$ and $\widehat{\text{JSD}}(\rm{P}_S, \rm{P}_T)$.

\subsection{Align $\rm{P}(\rm{Y}|\rm{X})$ with Class Weight} \label{sec:p_y}

In the above step, we align $\rm{P}(\rm{X}|\rm{Y})$ across domains by performing domain-invariant learning on the class-weighted source domain and the original target domain. In this step, we deal with the shift of $\rm{P}(\rm{Y})$. Suppose that we have successfully resolved the shift of $\rm{P}(\rm{X}|\rm{Y})$ with $G$, i.e., $\rm{P}_S(G(\rm{X})|\rm{Y})=\rm{P}_T(G(\rm{X})|\rm{Y})$. Then, according to the work of \cite{chan2005word}, we have:
\begin{equation} \label{eq:true_p_t_y}
\resizebox{1.\columnwidth}{!}{$\rm{P}_T(\rm{Y}=i|G(\rm{X}))=\frac{\gamma(\rm{Y}=i)\rm{P}_S(\rm{Y}=i|G(\rm{X}))}{\sum_{j =1}^{L} \gamma(\rm{Y}=j)\rm{P}_S(\rm{Y}=j|G(\rm{X}))},$}
\end{equation}
where $\gamma(\rm{Y}=i)={\rm{P}_T(\rm{Y}=i)}/{\rm{P}_S(\rm{Y}=i)}$. Of course, in most of the real-world tasks, we do not know the value of $\gamma(\rm{Y}=i)$. However, note that $\gamma(\rm{Y}=i)$ is exactly the expected class weight $\bm{w}^*_i$. Therefore, a natural practice of this step is to estimate $\gamma(\rm{Y}=i)$ with the obtained $\bm{w}_i$ in the first step and estimate $\rm{P}_T(\rm{Y}|G(\rm{X}))$ with:
\begin{equation} \label{eq:estimate_p_t_y}
\resizebox{1\columnwidth}{!}{$\rm{P}_T(\rm{Y}=i|G(\rm{X})) \leftarrow \frac{\bm{w}_i\rm{P}_S(\rm{Y}=i|G(\rm{X}))}{\sum_{j =1}^{L} \bm{w}_j \rm{P}_S(\rm{Y}=j|G(\rm{X}))}.$}
\end{equation}

In summary, to transfer methods of the DIRL paradigm to WDIRL, we should: first revise the definition of $\mathcal{L}_{inv}$, obtaining its corresponding WDIRL form $\hat{\mathcal{L}}_{inv}$; then perform supervised learning and domain-invariant representation learning on $\mathcal{D}_S$ and $\mathcal{D}_T$ according to Eq. (\ref{eq:weighted_general_loss}), obtaining a supervised classifier $\rm{P}_S(\rm{Y}|\rm{X}; \bm{\Phi})$ and a class weight vector $\bm{w}$; and finally, adjust $\rm{P}_S(\rm{Y}|\rm{X}; \bm{\Phi})$ using $\bm{w}$ according to Eq. (\ref{eq:estimate_p_t_y}) and obtain the target domain classifier $\rm{P}_T(\rm{Y}|\rm{X}; \bm{\Phi})$.

%\begin{algorithm}[t] 
%\caption{Training of the Student Network} \label{alg:single_oov}
%\begin{algorithmic}[1]
%\BState \textbf{Input:} source domain labeled dataset $\mathcal{D}_S$, target domain unlabeled dataset $\mathcal{D}_T$, supervised loss function $\mathcal{L}_{sup}$, domain-invariant loss function $\mathcal{L}_{inv}$
%\BState \textbf{Result:} target domain classifier $\hat{\rm{P}}_T(\rm{Y}|\rm{X}; \bm{\Phi})$
%\State obtain the weighted form of domain-invariant loss function $\hat{\mathcal{L}}_{inv}$ from $\mathcal{L}_{inv}$.
%\State minimize the task loss 
%\State adjust 
%\end{algorithmic}
%\end{algorithm}

\section{Experiment}

\subsection{Experiment Design}
Through the experiments, we empirically studied our analysis on DIRL and the effectiveness of our proposed solution in dealing with the problem it suffered from. In addition, we studied the impact of each step described in \textsection \ref{sec:p_x_y} and \textsection \ref{sec:p_y} to our proposed solution, respectively. To performe the study, we carried out performance comparison between the following models:
\begin{itemize}
\item \textbf{SO}: the source-only model trained using source domain labeled data without any domain adaptation.

\item \textbf{CMD}: the centre-momentum-based domain adaptation model \cite{zellinger2017central} of the original DIRL framework that implements $\mathcal{L}_{inv}$ with $\text{CMD}_K$.

\item \textbf{DANN}: the adversarial-learning-based domain adaptation model \cite{ganin2016domain} of the original DIRL framework that implements $\mathcal{L}_{inv}$ with $\text{JSD}(\rm{P}_S, \rm{P}_T)$.

\item \textbf{$\text{CMD}^\dagger$}: the weighted version of the CMD model that only applies the first step (described in \textsection \ref{sec:p_x_y}) of our proposed method.

\item \textbf{$\text{DANN}^\dagger$}: the weighted version of the DANN model that only applies the first step of our proposed method.

\item \textbf{$\text{CMD}^{\dagger \dagger}$}: the weighted version of the CMD model that applies both the first and second (described in \textsection \ref{sec:p_y}) steps of our proposed method.

\item \textbf{$\text{DANN}^{\dagger \dagger}$}: the weighted version of the DANN model that applies both the first and second steps of our proposed method.

\item \textbf{$\text{CMD}^{*}$}: a variant of {$\text{CMD}^{\dagger \dagger}$} that assigns $\bm{w}^*$ (estimate from target labeled data) to $\bm{w}$ and fixes this value during model training. 

\item \textbf{$\text{DANN}^{*}$}: a variant of {$\text{DANN}^{\dagger \dagger}$} that assigns $\bm{w}^*$ to $\bm{w}$ and fixes this value during model training. 
\end{itemize}

Intrinsically, {SO} can provide an empirical lowerbound for those domain adaptation methods. $\text{CMD}^{*}$ and $\text{DANN}^{*}$ can provide the empirical upbound of $\text{CMD}^{\dagger\dagger}$ and $\text{DANN}^{\dagger\dagger}$, respectively. In addition, by comparing performance of $\text{CMD}^{*}$ and $\text{DANN}^{*}$ with that of $\text{SO}$, we can know the effectiveness of the DIRL framework when $\rm{P}(\rm{Y})$ dose not shift across domains. By comparing $\text{CMD}^\dagger$ with $\text{CMD}$, or comparing $\text{DANN}^\dagger$ with $\text{DANN}$, we can know the effectiveness of the first step of our proposed method. By comparing {$\text{CMD}^{\dagger\dagger}$} with $\text{CMD}^{\dagger}$, or comparing  {$\text{DANN}^{\dagger\dagger}$} with {$\text{DANN}^{\dagger}$}, we can know the impact of the second step of our proposed method. And finally, by comparing {$\text{CMD}^{\dagger\dagger}$} with $\text{CMD}$, or comparing {$\text{DANN}^{\dagger\dagger}$} with $\text{DANN}$, we can know the general effectiveness of our proposed solution.

\begin{table*}[t]
\renewcommand{\arraystretch}{1.15}
\setlength\arrayrulewidth{1pt} 
\setlength\tabcolsep{2pt} 
\centering
\resizebox{\textwidth}{!}{
\begin{tabular}{l|>{\columncolor[gray]{0.92}}c|c>{\columncolor[gray]{0.92}}cc>{\columncolor[gray]{0.92}}c|c>{\columncolor[gray]{0.92}}cc>{\columncolor[gray]{0.92}}c}
\hline
S$\rightarrow$T &SO  & $\text{CMD}$ & $\text{CMD}^\dagger$ & $\text{CMD}^{\dagger\dagger}$ & $\text{CMD}^{*}$ & DANN & $\text{DANN}^\dagger$ & $\text{DANN}^{\dagger\dagger}$ & $\text{DANN}^{*}$ \\ \hline
%\rowcolor[gray]{0.9}
B$\rightarrow$D		& 83.52 $\pm$ 0.20 & 79.18 $\pm$ 0.28 & 82.01 $\pm$ 0.54 & 83.89 $\pm$ 0.65 & 84.83	$\pm$ 0.05 & 80.47 $\pm$ 0.52 & 84.53 $\pm$ 0.52 & 84.60 $\pm$ 0.18 & 84.33 $\pm$ 0.15\\ 
B$\rightarrow$E		& 81.83 $\pm$ 0.06 & 78.11 $\pm$ 0.19 & 84.02 $\pm$ 0.37 & 84.01 $\pm$ 0.45 & 84.26	$\pm$ 0.09 & 76.26 $\pm$ 1.16 & 84.75 $\pm$ 0.44 & 83.91 $\pm$ 0.58 & 83.71 $\pm$ 0.60\\ %\rowcolor[gray]{0.9}
B$\rightarrow$K		& 82.72 $\pm$ 0.02 & 80.19 $\pm$ 0.12 & 83.91 $\pm$ 0.24 & 85.49 $\pm$ 0.05 & 85.49	$\pm$ 0.06 & 79.66 $\pm$ 0.49 & 82.64 $\pm$ 0.59 & 83.32 $\pm$ 0.27 & 84.87 $\pm$ 0.41\\
D$\rightarrow$B		& 82.97 $\pm$ 0.06 & 81.47 $\pm$ 0.38 & 83.20 $\pm$ 0.10 & 83.10 $\pm$ 0.12 & 83.11	$\pm$ 0.03 & 82.08 $\pm$ 0.97 & 83.10 $\pm$ 0.38 & 82.65 $\pm$ 0.08 & 82.05 $\pm$ 0.22\\ %\rowcolor[gray]{0.9}
D$\rightarrow$E		& 81.97 $\pm$ 0.07 & 80.35 $\pm$ 0.03 & 82.48 $\pm$ 0.29 & 83.47 $\pm$ 0.12 & 83.57	$\pm$ 0.03 & 78.75 $\pm$ 0.54 & 83.01 $\pm$ 0.44 & 83.29 $\pm$ 0.51 & 83.09 $\pm$ 0.48\\
D$\rightarrow$K		& 83.51 $\pm$ 0.10 & 82.99 $\pm$ 0.22 & 86.94 $\pm$ 0.18 & 86.40 $\pm$ 0.23 & 86.34	$\pm$ 0.15 & 81.54 $\pm$ 0.70 & 85.05 $\pm$ 0.51 & 85.84 $\pm$ 0.71 & 86.06 $\pm$ 0.61\\ %\rowcolor[gray]{0.9}
E$\rightarrow$B		& 80.65 $\pm$ 0.11 & 78.09 $\pm$ 0.34 & 79.65 $\pm$ 0.40 & 81.35 $\pm$ 0.31 & 81.82	$\pm$ 0.07 & 78.94 $\pm$ 0.73 & 80.70 $\pm$ 0.94 & 81.63 $\pm$ 0.74 & 81.53 $\pm$ 0.33\\
E$\rightarrow$D		& 80.25 $\pm$ 0.25 & 77.16 $\pm$ 1.99 & 80.07 $\pm$ 0.49 & 82.20 $\pm$ 0.17 & 81.85	$\pm$ 0.08 & 76.87 $\pm$ 0.50 & 79.73 $\pm$ 0.77 & 81.24 $\pm$ 0.47 & 82.04 $\pm$ 0.15\\ %\rowcolor[gray]{0.9}
E$\rightarrow$K		& 87.43 $\pm$ 0.06 & 83.76 $\pm$ 0.15 & 86.87 $\pm$ 0.28 & 88.68 $\pm$ 0.13 & 89.00	$\pm$ 0.02 & 84.37 $\pm$ 0.89 & 87.89 $\pm$ 0.28 & 88.31 $\pm$ 0.36 & 88.38 $\pm$ 0.31\\
K$\rightarrow$B		& 80.05 $\pm$ 0.26 & 75.44 $\pm$ 0.37 & 81.00 $\pm$ 0.25 & 82.35 $\pm$ 0.16 & 82.34	$\pm$ 0.13 & 75.81 $\pm$ 0.21 & 80.97 $\pm$ 0.72 & 81.83 $\pm$ 0.32 & 81.13 $\pm$ 0.52\\ %\rowcolor[gray]{0.9}
K$\rightarrow$D		& 79.88 $\pm$ 0.13 & 73.52 $\pm$ 0.27 & 79.85 $\pm$ 0.15 & 83.58 $\pm$ 0.05 & 83.64	$\pm$ 0.06 & 74.27 $\pm$ 0.82 & 80.49 $\pm$ 0.07 & 83.11 $\pm$ 0.76 & 83.53 $\pm$ 0.10\\
K$\rightarrow$E		& 87.30 $\pm$ 0.02 & 81.73 $\pm$ 0.46 & 87.80 $\pm$ 0.13 & 87.87 $\pm$ 0.04 & 88.04	$\pm$ 0.01 & 82.19 $\pm$ 0.00 & 87.52 $\pm$ 0.26 & 87.55 $\pm$ 0.18 & 87.80 $\pm$ 0.18\\
\hline \hline
Ave & 82.67 $\pm$ 0.11 & 79.33 $\pm$ 0.40 & 83.15 $\pm$ 0.37 & 84.36 $\pm$ 0.21 & 84.52 $\pm$ 0.07 & 79.42 $\pm$ 0.63 & 83.28 $\pm$ 0.49 & 83.32 $\pm$ 0.43 & 84.04 $\pm$ 0.34 \\ \hline
\end{tabular}
}
\caption{Mean accuracy $\pm$ standard deviation over five runs on the 12 binary-class cross-domain tasks.}
\label{table:binary_result}
\end{table*}

\subsection{Dataset and Task Design}
We conducted experiments on the Amazon reviews dataset
\cite{blitzer2007biographies}, which is a benchmark dataset in the cross-domain sentiment analysis field. This dataset contains Amazon product reviews of four different product domains: Books (B), DVD (D), Electronics (E), and Kitchen (K) appliances. Each review is originally associated with a rating of 1-5 stars and is encoded in 5,000 dimensional feature vectors of bag-of-words unigrams and bigrams. 

\paragraph{Binary-Class.} From this dataset, we constructed 12 binary-class cross-domain sentiment analysis tasks: B$\rightarrow$D, B$\rightarrow$E, B$\rightarrow$K, D$\rightarrow$B, D$\rightarrow$E, D$\rightarrow$K, E$\rightarrow$B, E$\rightarrow$D, E$\rightarrow$K, K$\rightarrow$B, K$\rightarrow$D, K$\rightarrow$E. Following the setting of previous works, we treated a reviews as class `1' if it was ranked up to 3 stars, and as class `2' if it was ranked 4 or 5 stars.
For each task, $\mathcal{D}_S$ consisted of 1,000 examples of each class, and $\mathcal{D}_T$ consists of 1500 examples of class `1' and 500 examples of class `2'. In addition, since it is reasonable to assume that $\mathcal{D}_T$ can reveal the distribution of target domain data, we controlled the target domain testing dataset to have the same class ratio as $\mathcal{D}_T$. Using the same label assigning mechanism, we also studied model performance over different degrees of $\rm{P}(\rm{Y})$ shift, which was evaluated by the max value of $\rm{P}_S(\rm{Y}=i)/\rm{P}_T(\rm{Y}=i), \forall i=1, \cdots, L$. Please refer to Appendix \textbf{C} for more detail about the task design for this study. 

\paragraph{Multi-Class.} We additionally constructed 12 multi-class cross-domain sentiment classification tasks. Tasks were designed to distinguish reviews of 1 or 2 stars (class 1) from those of 4 stars (class 2) and those of 5 stars (class 3). For each task, $\mathcal{D}_S$ contained 1000 examples of each class, and $\mathcal{D}_T$ consisted of 500 examples of class 1, 1500 examples of class 2, and 1000 examples of class 3. Similarly, we also controlled the target domain testing dataset to have the same class ratio as $\mathcal{D}_T$.

\subsection{Implementation Detail}

For all studied models, we implemented $G$ and $f$ using the same architectures as those in \cite{zellinger2017central}. For those DANN-based methods (i.e., DANN, $\text{DANN}^{\dagger}$, $\text{DANN}^{\dagger\dagger}$, and $\text{DANN}^{*}$), we implemented the discriminator $D$ using a 50 dimensional hidden layer with relu activation functions and a linear classification layer. Hyper-parameter $K$ of $\text{CMD}_K$ and $\widehat{\text{CMD}}_K$ was set to 5 as suggested by \citet{zellinger2017central}. Model optimization was performed using RmsProp \cite{tieleman2012lecture}. Initial learning rate of $\bm{w}$ was set to 0.01, while that of other parameters was set to 0.005 for all tasks. 

Hyper-parameter $\alpha$ was set to 1 for all of the tested models. We searched for this value in range $\alpha=[1, \cdots, 10]$ on task B $\rightarrow$ K. Within the search, label distribution was set to be uniform, i.e., $\rm{P}(\rm{Y}=i)=1/L$, for both domain B and K. We chose the value that maximize the performance of CMD on testing data of domain K. You may notice that this practice conflicts with the setting of unsupervised domain adaptation that we do not have labeled data of the target domain for training or developing. However, we argue that this practice would not make it unfair for model comparison since all of the tested models shared the same value of $\alpha$ and $\alpha$ was not directly fine-tuned on any tested task. With the same consideration, for every tested model, we reported its best performance achieved on testing data of the target domain during its training\footnote{Please refer to the attached source code in the appendix for more implementation detail of this work.}.
%You can refer to the appended source code for more detail about model implementation and experiment setting. Within the appendixes, you can also access datasets of the 12 binary-class cross-domain classification tasks. Due to the data size, we did not upload datasets of the 12 multi-class domain adaptation tasks, however we will publish them on \url{anonymous-url}.
\begin{figure}[t]
\centering
\includegraphics[width=1\columnwidth]{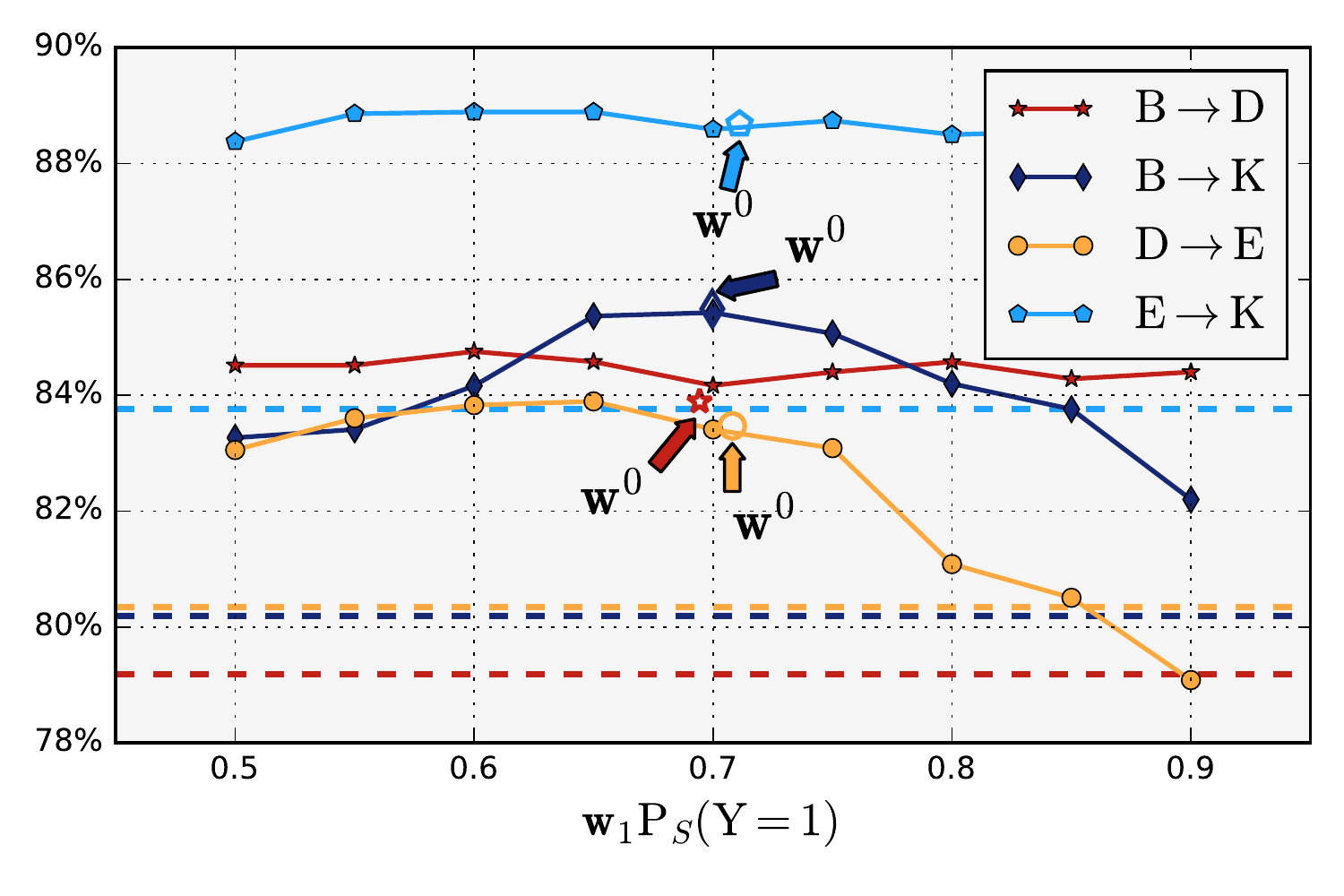}
\caption{Mean accuracy of WCMD$^{\dagger\dagger}$ over different initialization of $\bm{w}$. The empirical optimum value of $\bm{w}$ makes $\bm{w}_1 \rm{P}_S(\rm{Y}=1) = 0.75$. The dot line in the same color denotes performance of the CMD model and `$\bm{w}^0$' annotates performance of WCMD$^{\dagger\dagger}$ when initializing $\bm{w}$ with $\bm{w}^0$.}
\label{fig:wcmd_by_w}
\vspace*{-\baselineskip}
\end{figure}

To initialize $\bm{w}$, we used label prediction of the source-only model. Specifically, let $\rm{P}_{SO}(\rm{Y}|\rm{X}; \bm{\theta}_{SO})$ denote the trained source-only model. We initialized $\bm{w}_i$ by:
\begin{equation*}
\bm{w}^0_i = \frac{\frac{1}{|\mathcal{D}_T|}\sum_{x \in \mathcal{D}_T} \rm{P}_{SO}(y=i|x; \bm{\theta}_{SO})}{\frac{1}{|\mathcal{D}_S|} \sum_{y \in \mathcal{D}_S} \mathbb{I}(y=i)}.
\end{equation*}
Here, $\mathbb{I}$ denotes the indication function. To offer an intuitive understanding to this strategy, we report performance of WCMD$^{\dagger\dagger}$ over different initializations of $\bm{w}$ on 2 within-group (B$\rightarrow$D, E$\rightarrow$K) and 2 cross-group (B$\rightarrow$K, D$\rightarrow$E) binary-class domain adaptation tasks in Figure \ref{fig:wcmd_by_w}. Here, we say that domain B and D are of a group, and domain E and K are of another group since B and D are similar, as are E and K, but the two groups are different from one another \cite{blitzer2007biographies}. Note that $\rm{P}_{S}(\rm{Y}=1)=0.5$ is a constant, which is estimated using source labeled data. %On those tasks, the obtained values of $\bm{w}^0_1 \rm{P}_S(\rm{Y}=1)$ with the strategy were 0.694, 0.700, 0.708, and 0.711, respectively.
From the figure, we can obtain three main observations. First, WCMD$^{\dagger\dagger}$ generally outperformed its CMD counterparts with different initialization of $\bm{w}$. Second, it was better to initialize $\bm{w}$ with a relatively balanced value, i.e., $\bm{w}_i \rm{P}_S(\rm{Y}=i) \rightarrow \frac{1}{L}$ (in this experiment, $L=2$).
Finally, $\bm{w}^0$ was often a good initialization of $\bm{w}$, indicating the effectiveness of the above strategy.

\begin{table}[]
\setlength\arrayrulewidth{1pt} 
\setlength\tabcolsep{2pt} 
\centering
\resizebox{1.\columnwidth}{!}{
\begin{tabular}{l|>{\columncolor[gray]{0.92}}cc>{\columncolor[gray]{0.92}}cc}
\hline
Model  							& B$\rightarrow$D 	& B$\rightarrow$K 	& D$\rightarrow$E 	& E$\rightarrow$K \\ \hline \hline 
SO  							& 59.10 $\pm$ 0.83	& 60.77 $\pm$ 1.47	& 57.50 $\pm$ 0.67	& 66.13 $\pm$ 4.09\\ \hline
CMD 							& 59.11 $\pm$ 0.70	& 60.35 $\pm$ 1.32	& 56.59 $\pm$ 1.00	& 62.78 $\pm$ 3.16\\
$\text{CMD}^{\dagger}$ 			& 59.16 $\pm$ 1.00	& 61.32 $\pm$ 1.67	& 58.32 $\pm$ 1.89	& 64.94 $\pm$ 3.91\\
$\text{CMD}^{\dagger\dagger}$ 	& 60.69 $\pm$ 0.82	& 61.18 $\pm$ 1.84	& 60.12 $\pm$ 0.89	& 66.65 $\pm$ 3.77\\
$\text{CMD}^{*}$				& 60.26 $\pm$ 0.76 	& 61.77 $\pm$ 1.43 	& 59.84 $\pm$ 0.84 	& 66.42 $\pm$ 3.70\\
\hline
DANN 							& 59.16 $\pm$ 0.60	& 61.85 $\pm$ 0.64	& 57.80 $\pm$ 0.32	& 65.50 $\pm$ 0.53\\
$\text{DANN}^{\dagger}$ 		& 60.07 $\pm$ 0.39	& 62.71 $\pm$ 0.34	& 59.97 $\pm$ 0.49	& 66.86 $\pm$ 3.23\\
$\text{DANN}^{\dagger\dagger}$ 	& 59.32 $\pm$ 0.52	& 63.07 $\pm$ 0.51	& 58.95 $\pm$ 0.32	& 66.54 $\pm$ 3.24\\
$\text{DANN}^{*}$ 				& 60.49 $\pm$ 0.17 	& 62.90 $\pm$ 0.39 	& 58.89 $\pm$ 0.37 	& 66.45 $\pm$ 3.23\\
\hline
\end{tabular}
}
\caption{Mean accuracy $\pm$ standard deviation over five runs on the 2 within-group and 2 cross-group multi-class domain-adaptation tasks.}
\label{table:multi_result}
\end{table}

\begin{figure*}[t]
\includegraphics[width=\textwidth]{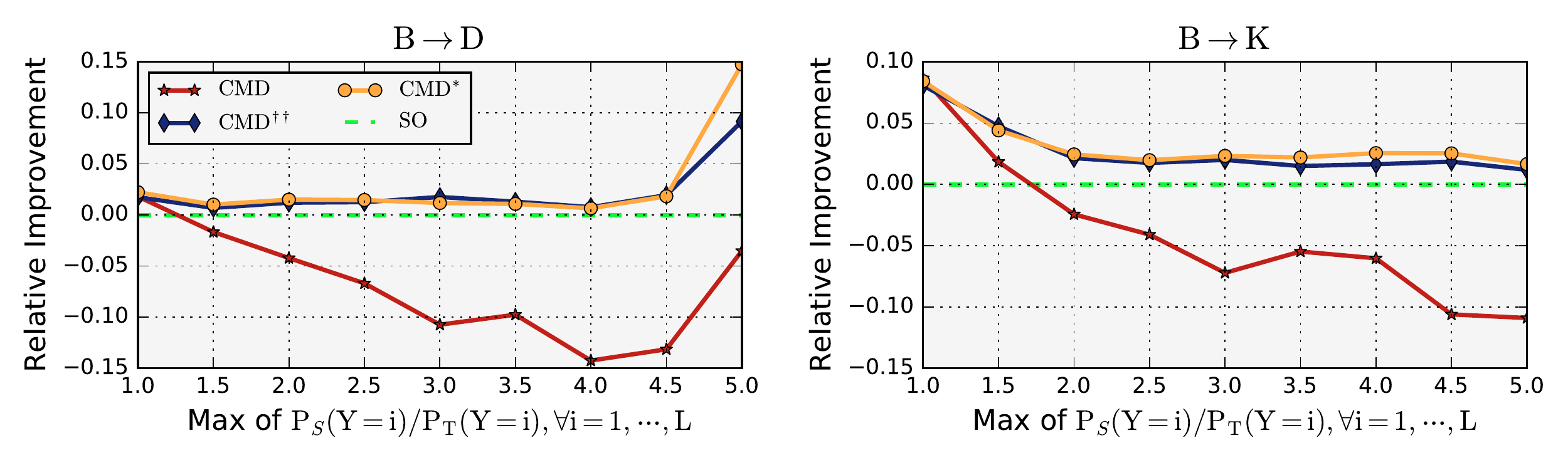}
\caption{Relative improvement over the SO baseline under different degrees of $\rm{P}(\rm{Y})$ shift on the B$\rightarrow$D and B $\rightarrow$K binary-class domain adaptation tasks.}
\label{fig:target_shift_degree}
\end{figure*}

\subsection{Main Result}

Table \ref{table:binary_result} shows model performance on the 12 binary-class cross-domain tasks. From this table, we can obtain the following observations. 
\textbf{First}, CMD and DANN underperform the source-only model (SO) on all of the 12 tested tasks, indicating that DIRL in the studied situation will degrade the domain adaptation performance rather than improve it. This observation confirms our analysis. 
\textbf{Second}, {$\text{CMD}^{\dagger\dagger}$} consistently outperformed CMD and SO. This observation shows the effectiveness of our proposed method for addressing the problem of the DIRL framework in the studied situation. Similar conclusion can also be obtained by comparing performance of {$\text{DANN}^{\dagger\dagger}$} with that of DANN and SO.
\textbf{Third}, $\text{CMD}^{\dagger}$ and $\text{DANN}^{\dagger}$ consistently outperformed $\text{CMD}$ and DANN, respectively, which shows the effectiveness of the first step of our proposed method. 
\textbf{Finally}, on most of the tested tasks, $\text{CMD}^{\dagger\dagger}$ and $\text{DANN}^{\dagger\dagger}$ outperforms $\text{CMD}^{\dagger}$ and $\text{DANN}^{\dagger}$, respectively. %This verifies the effectiveness of the second step of our method. 

Figure \ref{fig:target_shift_degree} depicts the relative improvement, e.g., $(\text{Acc}(\text{CMD})-\text{Acc}(\text{SO}))/\text{Acc}(\text{SO})$, of the domain adaptation methods over the SO baseline under different degrees of $\rm{P}(\rm{Y})$ shift, on two binary-class domain adaptation tasks (You can refer to Appendix \textbf{C} for results of the other models on other tasks). From the figure, we can see that the performance of CMD generally got worse as the increase of $\rm{P}(\rm{Y})$ shift. In contrast, our proposed model $\text{CMD}^{\dagger\dagger}$ performed robustly to the varying of $\rm{P}(\rm{Y})$ shift degree. Moreover, it can achieve the near upbound performance characterized by $\text{CMD}^{*}$. This again verified the effectiveness of our solution. 

Table \ref{table:multi_result} reports model performance on the 2 within-group (B$\rightarrow$D, E$\rightarrow$K) and the 2 cross-group (B$\rightarrow$K, D$\rightarrow$E) multi-class domain adaptation tasks (You can refer to Appendix \textbf{D} for results on the other tasks). From this table, we observe that on some tested tasks, $\text{CMD}^{\dagger\dagger}$ and $\text{DANN}^{\dagger\dagger}$ did not greatly outperform or even slightly underperformed $\text{CMD}^{\dagger}$ and $\text{DANN}^{\dagger}$, respectively. A possible explanation of this phenomenon is that the distribution of $\mathcal{D}_T$ also differs from that of the target domain testing dataset. Therefore, the estimated or learned value of $\bm{w}$ using $\mathcal{D}_T$ is not fully suitable for application to the testing dataset. 
%This is verified by performance comparison between $\text{CMD}^{\dagger}$ and $\text{CMD}^{*}$, and between $\text{DANN}^{\dagger}$ and $\text{DANN}^{*}$, from which we can observe that $\text{CMD}$ and $\text{DANN}$ outperform their empirical upbound on some task. 
This explanation is verified by the observation that $\text{CMD}^{\dagger}$ and $\text{DANN}^{\dagger}$ also slightly outperforms $\text{CMD}^{*}$ and $\text{DANN}^{*}$ on these tasks, respectively.

\section{Conclusion}

In this paper, we studied the problem of the popular domain-invariant representation learning (DIRL) framework for domain adaptation, when $\rm{P}(\rm{Y})$ changes across domains. To address the problem, we proposed a weighted version of DIRL (WDIRL). We showed that existing methods of the DIRL framework can be easily transferred to our WDIRL framework. Extensive experimental studies on benchmark cross-domain sentiment analysis datasets verified our analysis and showed the effectiveness of our proposed solution.

\bibliographystyle{acl_natbib}
\bibliography{reference}

\end{document}